\newtheorem{theorem}{Theorem}
\newtheorem*{theorem*}{Theorem}
\newtheorem{lemma}{Lemma}
\newtheorem*{lemma*}{Lemma}
\newtheorem{definition}{Definition}
\newtheorem{proposition}{Proposition}
\title{Adaptive Hyperbolic Kernels: Modulated Embedding \\in de Branges-Rovnyak Spaces}
\author {
    Leping Si\textsuperscript{\rm 1,\rm 2}\equalcontrib,
    Meimei Yang\textsuperscript{\rm 1,\rm 2}\equalcontrib,
    Hui Xue\textsuperscript{\rm 1,\rm 2}\thanks{Corresponding Author},
    Shipeng Zhu\textsuperscript{\rm 1,\rm 2},
    Pengfei Fang\textsuperscript{\rm 1,\rm 2}
}
\begin{document}

\maketitle

\begin{abstract}
Hierarchical data pervades diverse machine learning applications, including natural language processing, computer vision, and social network analysis. Hyperbolic space, characterized by its negative curvature, has demonstrated strong potential in such tasks due to its capacity to embed hierarchical structures with minimal distortion. Previous evidence indicates that the hyperbolic representation capacity can be further enhanced through kernel methods. 
However, existing hyperbolic kernels still suffer from mild geometric distortion or lack adaptability. This paper addresses these issues by introducing a curvature-aware de Branges–Rovnyak space, a reproducing kernel Hilbert space (RKHS)  that is isometric to a Poincar\'e ball. 
We design an adjustable multiplier to select the appropriate RKHS corresponding to the hyperbolic space with any curvature adaptively. Building on this foundation, we further construct a family of adaptive hyperbolic kernels, including the novel adaptive hyperbolic radial kernel, whose learnable parameters modulate hyperbolic features in a task-aware manner. Extensive experiments on visual and language benchmarks demonstrate that our proposed kernels outperform existing hyperbolic kernels in modeling hierarchical dependencies.
\end{abstract}

\begin{links}
    \link{Code}{https://github.com/daslp/De-Branges-Rovnyak-Kernel.git}
    \link{Extended version}{https://doi.org/10.48550/arXiv.2511.09921}
\end{links}

\section{Introduction}

Hierarchical structures are prevalent in real-world data across various machine learning domains, such as natural language processing (NLP), computer vision (CV), and social network analysis \cite{Mettes2024, DBLP:journals/pami/PengVMSZ22}. Hyperbolic space, owing to its exponential expansion property, provides a more suitable geometric framework for representing such hierarchical data than the commonly used Euclidean space. As illustrated in Figure~\ref{fig1}, hyperbolic space allows tree-like data with hierarchical structure to spread out without distortion, whereas embedding tree-like data in Euclidean space often results in crowding and overlap.

To better capture complex hierarchical structures, hyperbolic geometry has been introduced into machine learning as an alternative to Euclidean geometry. Nickel et al.\  pioneered hyperbolic embeddings by optimizing them on Riemannian manifolds, demonstrating significant gains in textual data~\cite{DBLP:conf/nips/NickelK17}. Since then, hyperbolic methods have been applied to a wide range of machine learning tasks, such as image classification and graph node prediction, leveraging the strong representational capacity of hyperbolic space~\cite{DBLP:conf/nips/GaneaBH18,DBLP:conf/iclr/ShimizuMH21, DBLP:journals/corr/abs-1805-09786, DBLP:conf/nips/ChamiYRL19,DBLP:conf/cvpr/KhrulkovMUOL20,DBLP:conf/iclr/BdeirSL24,DBLP:journals/ijcv/FangHLP23}.

\begin{figure}[t]
\centering
\includegraphics[width=0.95\columnwidth]{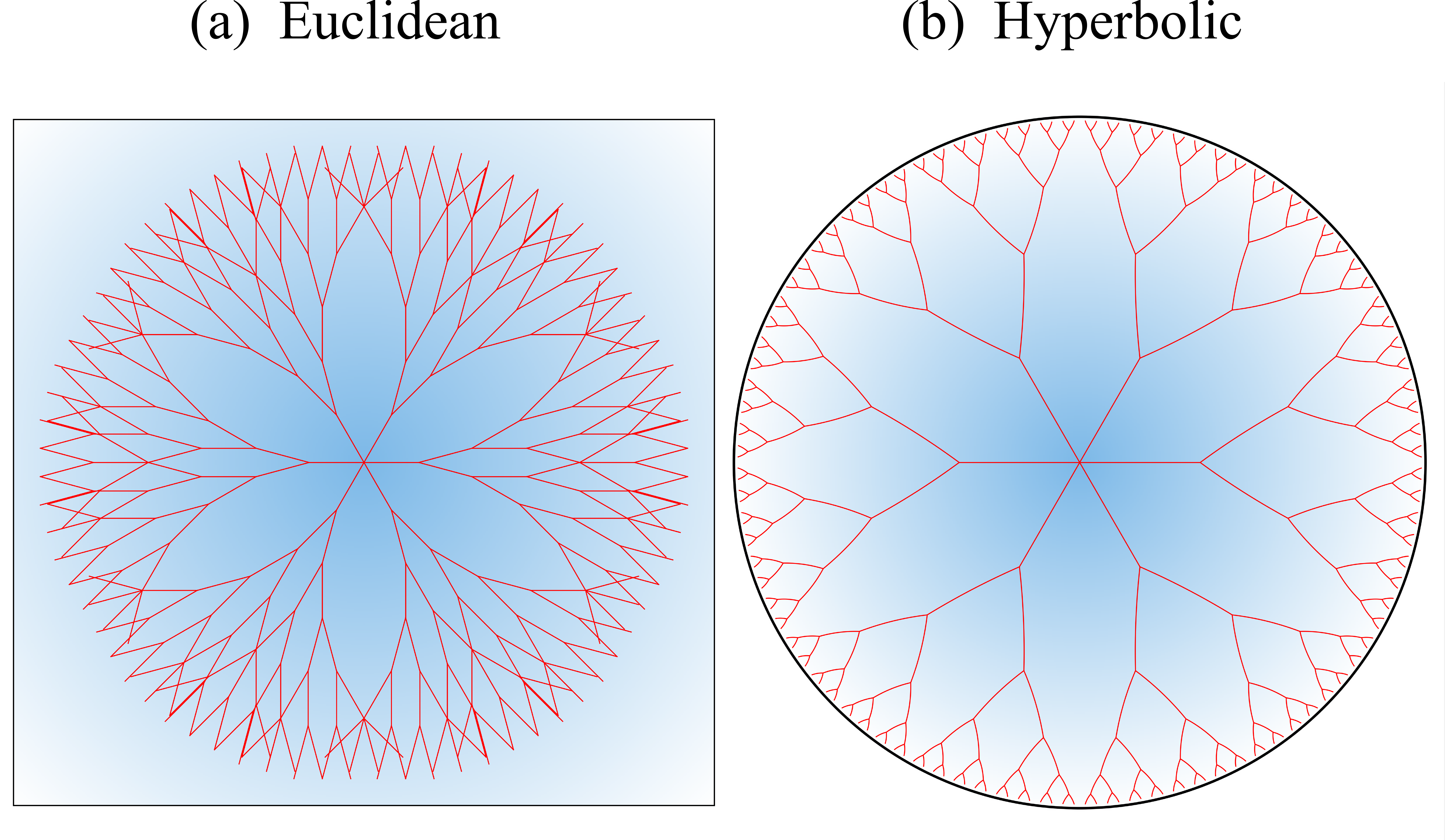}
\caption{Embedding of the same tree characterized by identical branching angles and branch lengths (with hierarchical node structure) in Euclidean and hyperbolic spaces. The left figure shows the embedding in Euclidean space, where some branches overlap. The right figure illustrates the embedding in hyperbolic space, where the exponential expansion property enables distortion-free embedding of the tree.
}
\label{fig1}
\end{figure}

Recent studies have shown that integrating kernel methods with hyperbolic embeddings can further enhance their representational capacity. 
Cho et al.\ introduced the hyperbolic kernel SVM \cite{DBLP:conf/aistats/ChoD0B19}, which constructs hyperbolic kernels based on an isometric mapping between hyperbolic models.   
However, these kernels are not positive definite (pd) and assume a fixed curvature value, limiting the stability and flexibility. Fang et al.\ \cite{DBLP:conf/iccv/FangHP21,DBLP:journals/ijcv/FangHLP23}  addressed this limitation by proposing a family of pd hyperbolic kernels with curvature parameters based on the tangent space of Poincar\'e ball. But the first-order approximation introduces geometric distortions. Yang et al.\ \cite{DBLP:conf/ijcai/YangFX23} reduced the distortion by constructing curvature-aware kernels that map data from hyperbolic space to reproducing kernel Hilbert spaces (RKHS) isometrically. Despite their progress, most existing methods still suffer from fixed functional forms, which can lead to over-representation or structural underfitting and then reduce adaptability.  Consequently, a key challenge is to design hyperbolic kernels that not only preserve the underlying geometry but also adapt flexibly to task-driven requirements.

 To this end, we first construct a curvature-aware de Branges–Rovnyak space~\cite{ball2014branges}, an RKHS that is isometric to Poincar\'e ball, thereby preserving hyperbolic geometry with minimal distortion. Moreover we introduce an adjustable multiplier that allows us to select the appropriate de Branges–Rovnyak space adaptively. Leveraging this mechanism, we further construct a family of adaptive hyperbolic kernels that includes the hyperbolic counterparts of the linear, polynomial, RBF, and Laplacian kernels, along with a distinctive member, the adaptive hyperbolic radial kernel (AHRad).  Thanks to the learnable parameters, AHRad can adaptively enhance or suppress hyperbolic features, enabling task-aware modulation of representations. Overall, the framework flexibly controls hyperbolic representations and encodes hierarchical information with minimal distortion across diverse applications.

Our main contributions are summarized as follows:
\begin{itemize}
\item We construct the curvature-aware de Branges–Rovnyak kernel that realizes an isometric mapping from hyperbolic space with arbitrary curvature to the de Branges–Rovnyak space, thereby providing a rigorous bridge between hyperbolic geometry and RKHS.

\item  We introduce an adjustable multiplier within the curvature-aware de Branges–Rovnyak space, yielding a new formulation of hyperbolic kernels that can adaptively select the RKHS best matched to a hyperbolic space with any given curvature.

\item We develop a series of adaptive hyperbolic kernels,  including hyperbolic linear, polynomial, RBF, and Laplacian kernels, as well as an adaptive hyperbolic radial kernel (AHRad), to enhance both representational power and flexibility.

\item We validate our approach with extensive experiments on diverse tasks, including zero-shot and few-shot image recognition, as well as semantic textual similarity in NLP (e.g., STS-B), and demonstrate that our adaptive hyperbolic kernels outperform existing methods.

\end{itemize}

\section{Related Work}
\subsection{Hyperbolic Kernel Learning}
Hyperbolic space has shown great potential in modeling hierarchical structures due to its exponential growth property, and kernel methods can further enhance the representational capability of hyperbolic embeddings. Early foundational work introduced hyperbolic embeddings for textual data\cite{DBLP:conf/nips/NickelK17} and image \cite{DBLP:conf/cvpr/KhrulkovMUOL20}, laying a solid foundation for subsequent hyperbolic kernel learning paradigms.

In 2019, Cho et al. \cite{DBLP:conf/aistats/ChoD0B19} first proposed a hyperbolic support vector machine (SVM) equipped with the hyperbolic polynomial kernel, an indefinite kernel defined in the hyperboloid model. Their work demonstrated that the optimization formulation, equipped with the Minkowski inner product in the hyperboloid model, closely resembles Euclidean SVMs. This approach achieved improved classification performance on graph-structured and language data. To address the instability issues caused by indefinite kernels and the limited flexibility arising from a fixed curvature value, Fang et al. proposed various pd hyperbolic kernel functions \cite{DBLP:conf/iccv/FangHP21, DBLP:journals/ijcv/FangHLP23}. These methods first project hyperbolic data onto the tangent space and then construct hyperbolic kernels by integrating the mapped features into Euclidean kernels. Although these kernels demonstrated strong performance in various computer vision tasks, they still suffer from distortions due to their first-order approximation of hyperbolic geometry. To mitigate such distortions, Yang et al. \cite{DBLP:conf/ijcai/YangFX23} proposed a novel approach inspired by the isometry between hyperbolic spaces and certain RKHSs~\cite{arcozzi2007diameter}. They designed a series of hyperbolic kernels based on this isometry,  enhancing hyperbolic representation power and achieving superior performance in graph learning and computer vision tasks.

\section{Notations and Preliminaries}

\subsection{Notations}

Throughout this paper, we let $\mathbb{R}^n$ denote the $n$-dimensional real vector space, $\mathbb{C}^n$ the $n$-dimensional complex vector space, $\mathbb{B}^n(c)$ an open ball of radius $\frac{1}{\sqrt{c}}$ in $\mathbb{C}^n$, and $\mathbb{D}^n(c)=(\mathbb{B}^n(c),\hat{g}_c)$ the same ball equipped with the Riemannian metric $\hat{g}_c$ (i.e., the Poincar\'e ball model), where the curvature of the corresponding hyperbolic space is $-c,c>0$. Moreover, we denote $\bm{T}_{\bm{z}}\mathbb{D}^n(c)\subset \mathbb{R}^n$ as the tangent space at $\bm{z}\in \mathbb{D}^n(c)$. 

\subsection{Poincar\'e Ball Model}

Hyperbolic space admits multiple isometric models for representation, among which the most commonly used are the Poincar\'e ball model, the Poincar\'e half space model, the Klein model, the Lorentz (Hyperboloid) model, and the Hemisphere model \cite{beltrami1868teoria, cannon1997hyperbolic}. The Poincar\'e ball model is one of the most widely used models for representing hyperbolic geometry. The n-dimensional Poincar\'e ball describes hyperbolic space as a Riemannian manifold equipped with a Riemannian metric $\hat{g}_c$:

\begin{equation}
    \mathbb{D}^n(c) = \{\bm{z}\in\mathbb{C}^n \mid \Vert z\Vert<\tfrac{1}{\sqrt{c}},\quad c>0\}.
\end{equation}

The Riemannian metric is defined by $\hat{g}_c(\bm{z})=\lambda_c^2(\bm{z})\cdot g^E$, where $\lambda_c(\bm{z})=\frac{1}{1-c\Vert z\Vert^2}$ is the conformal factor and $g^E=\bm{I}_n$ is the Euclidean metric. Consequently, the Riemannian metric equips the tangent space with an inner product $\langle \bm{u},\bm{v} \rangle_{\bm{T}_{\bm{z}}\mathbb{D}^n(c)}=\bm{u}^\top \hat{g}_c(\bm{z}) \bm{v},\forall \bm{u},\bm{v}\in \bm{T}_{\bm{z}}\mathbb{D}^n(c)$  at any point $\bm{z}\in \mathbb{D}^n(c)$.

We detail several hyperbolic operations based on the M\"obius gyrovector spaces \cite{ungar1998pythagoras, ungar2008analytic} adopted in this paper as follows.

\begin{itemize}
    \item \textbf{Exponential Map at the Origin:} The exponential map takes a point $\bm{v}$ in the tangent space $\bm{T}_{\bm{z}}\mathbb{D}^n(c)$ as a velocity vector and maps it along the geodesic on $\mathbb{D}^n(c)$ to a corresponding point. We employ the exponential map to project Euclidean features into hyperbolic space, for which it suffices to consider $\bm{T}_{\bm{0}}\mathbb{D}^n(c)$:
    \begin{equation}
    \begin{aligned}
        \text{exp}_{\bm{0}}:& \bm{T}_{\bm{0}}\mathbb{D}^n(c) \to \mathbb{D}^n(c), \\&\bm{v} \mapsto \text{tanh}\left(\sqrt{c}\Vert \bm{v}\Vert\right)\frac{\bm{v}}{\sqrt{c}\Vert \bm{v}\Vert},
        \label{eq: exp}
    \end{aligned}
    \end{equation}
    \item \textbf{M\"obius Self-Mappings and Pseudo-Hyperbolic distance:}
    The Poincar\'e ball model possesses its own automorphisms (i.e., the M\"obius self-mappings) and pseudo-hyperbolic distance. In this work, we introduce a generalized formulation of M\"obius self-mappings defined on Poincar\'e balls with arbitrary curvature \cite{DBLP:conf/ijcai/YangFX23}. When $n=1$, the M\"obius self-mappings are equal to the M\"obius subtraction:
    \begin{equation}
        \varphi^c_{z_i}(z_j)=\frac{z_i-z_j}{1-cz_i^*z_j}, \quad z_i,z_j\in\mathbb{D}^1(c).
    \end{equation}
    While certain correction terms need to be introduced when $n>1$:
    \begin{equation}
       \bm{\varphi}^c_{\bm{z}_i}(\bm{z}_j)=\frac{\bm{z}_i-P^c_{\bm{z}_i}(\bm{z}_j)-s^c_{\bm{z}_i}Q^c_{\bm{z}_i}(\bm{z}_j)}{1-c\bm{z}_i^*\bm{z}_j},
    \end{equation}
    where $P^c_{\bm{z}_i}(\bm{z}_j)=0$ if $\bm{z}_i=\bm{0}$, otherwise $\frac{\bm{z}_i^*\bm{z}_j}{\Vert \bm{z}_i\Vert^2}\bm{z}_i$. Additionally,  $s^c_{\bm{z}_i}=\sqrt{1-c\Vert\bm{z}_i\Vert^2}$ and $Q^c_{\bm{z}_i}(\bm{z}_j)=\bm{z}_j-P^c_{\bm{z}_i}(\bm{z}_j)$.
        
\end{itemize}

\section{Adaptive Hyperbolic Kernels}
In this section, we introduce our proposed adaptive hyperbolic kernel framework. We begin by formulating a curvature-aware generalization of the de Branges–Rovnyak space, which serves as the theoretical foundation for our kernel design. Based on this construction, we then present a class of hyperbolic kernels that support curvature flexibility and task-adaptive modulation.

\subsection{Curvature-aware De Branges-Rovnyak Space}

We first introduce the de Branges-Rovnyak space, an RKHS defined via a subtractive kernel with a multiplier function~\cite{sautel2022some}, and then generalize it to a curvature-aware formulation on Poincar\'e balls of arbitrary curvature, enabling adaptive geometric representation.
\begin{definition}
    A Hilbert space $\mathcal{H}_n^{\bm{b}}$ on $\mathbb{B}^n$ is a de Branges-Rovnyak space if and only if it is the RKHS associated with the following kernel function:
    \begin{equation}
        k^{\bm{b}}(\bm{z}_i,\bm{z}_j)=\frac{1-\bm{b}(\bm{z}_i)^*\bm{b}(\bm{z}_j)}{1-\bm{z}_i^*\bm{z}_j},
    \end{equation}
    for some {{$\bm{b}:\mathbb{B}^n\to \mathcal{D},\bm{z}\mapsto(b_1(\bm{z}),b_2(\bm{z}),\ldots,b_r(\bm{z}))^\top$ and $r\in\mathbb{N}\cup\{\infty\}$}}. $\mathcal{D}$ denotes an $r$-dimensional Hilbert space.
\end{definition}
To ensure the de Branges-Rovnyak space is well-defined,  the kernel $k^{\bm{b}}$ should be pd ~\cite{aronszajn1950theory}.
This condition depends on the choice of the multiplier function $\bm{b}$, which must belong to the following multiplier space: 
\begin{equation}
\begin{aligned}
\mathcal{M}(\mathcal{H}^2_n \otimes \mathcal{D}, \mathcal{H}^2_n)
= \Big\{ &\bm{f} : \mathbb{B}^n \to \mathcal{D} \;\Big|\; \bm{f} \bm{h} \in \mathcal{H}^2_n \\
&\; \forall \bm{h} \in \mathcal{H}^2_n \otimes \mathcal{D} \Big\},
\end{aligned}
\end{equation}
where $\otimes$ denotes the tensor product over the complex field and $\mathcal{H}^2_n$ is the Drury Arveson Hardy space \cite{drury1978generalization,Arveson1998}. Here $\mathcal{H}^2_n \otimes \mathcal{D}$ is canonically identified with the vector-valued function space of dimension $r$ with components residing in the Drury Arveson Hardy space. Under this framework, the bounded multiplier $\bm{f}$ acts on a function $\bm{h}$ via pointwise multiplication $(\bm{f} \bm{h})(\bm{z})=\sum_{i=1}^rf_i(\bm{z})h_i(\bm{z}),\quad \bm{z}\in\mathbb{B}^n,h_i\in\mathcal{H}^2_n$.

The following proposition characterizes the condition under which the de Branges-Rovnyak kernel $k^{\bm{b}}$ remains pd:
\begin{proposition} \cite{sautel2022some}
A function $\bm{b}$ belongs to a multiplier in $\mathcal{M}(\mathcal{H}^2_n \otimes \mathcal{D}, \mathcal{H}^2_n)$ if and only if:
    \begin{equation}
        \frac{\varepsilon^2\bm{I}-\bm{b}(\bm{z}_i)^*\bm{b}(\bm{z}_j)}{1-\bm{z}_i^*\bm{z}_j}\succeq 0,
    \end{equation}
    for some $\varepsilon>0$. The infimum of such constants $\varepsilon$ is known as the multiplier norm of $\bm{b}$, denoted by $\Vert\bm{b}\Vert_{\mathcal{M}(\mathcal{H}^2_n \otimes \mathcal{D}, \mathcal{H}^2_n)}$.
    \label{prop: mul}
\end{proposition}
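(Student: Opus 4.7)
My plan is to use the standard RKHS dictionary that translates multiplication-operator boundedness into positive semi-definiteness of an associated kernel, applied to the Drury--Arveson space $\mathcal{H}^2_n$ whose reproducing kernel is the Szeg\H{o}-type kernel
\[
k(\bm{z},\bm{w})=\frac{1}{1-\bm{z}^*\bm{w}},\qquad \bm{z},\bm{w}\in\mathbb{B}^n.
\]
By definition, $\bm{b}\in\mathcal{M}(\mathcal{H}^2_n\otimes\mathcal{D},\mathcal{H}^2_n)$ precisely when the pointwise multiplication operator $M_{\bm{b}}:\bm{h}\mapsto \bm{b}\bm{h}$ sends $\mathcal{H}^2_n\otimes\mathcal{D}$ into $\mathcal{H}^2_n$; by the closed graph theorem this operator is automatically bounded, and its operator norm is the multiplier norm $\|\bm{b}\|_{\mathcal{M}(\mathcal{H}^2_n\otimes\mathcal{D},\mathcal{H}^2_n)}$. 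So the proposition reduces to the statement that $\|M_{\bm{b}}\|\le \varepsilon$ if and only if the kernel $(\varepsilon^2\bm{I}-\bm{b}(\bm{z}_i)^*\bm{b}(\bm{z}_j))/(1-\bm{z}_i^*\bm{z}_j)$ is positive semi-definite, with the multiplier norm equal to the infimum of admissible $\varepsilon$.

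The key intermediate step is to compute the action of the adjoint $M_{\bm{b}}^*:\mathcal{H}^2_n\to\mathcal{H}^2_n\otimes\mathcal{D}$ on reproducing kernels. Using reproducing identities, for any $\bm{w}\in\mathbb{B}^n$ and any $\bm{h}\in\mathcal{H}^2_n\otimes\mathcal{D}$,
\[
\langle \bm{h},M_{\bm{b}}^* k_{\bm{w}}\rangle=\langle M_{\bm{b}}\bm{h},k_{\bm{w}}\rangle=\bm{b}(\bm{w})^\top\bm{h}(\bm{w}),
\]
from which one identifies $M_{\bm{b}}^* k_{\bm{w}}=k_{\bm{w}}\otimes \overline{\bm{b}(\bm{w})}$. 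Consequently,
\[
\langle (\varepsilon^2 I-M_{\bm{b}}M_{\bm{b}}^*)k_{\bm{w}_j},k_{\bm{w}_i}\rangle
=\frac{\varepsilon^2-\bm{b}(\bm{w}_i)^*\bm{b}(\bm{w}_j)}{1-\bm{w}_i^*\bm{w}_j}.
\]
The necessity half is now immediate: if $\|M_{\bm{b}}\|\le\varepsilon$, then $\varepsilon^2 I-M_{\bm{b}}M_{\bm{b}}^*\succeq 0$ as an operator on $\mathcal{H}^2_n$, and specializing to finite families of reproducing kernels yields positive semi-definiteness of the matrix $\bigl[(\varepsilon^2\bm{I}-\bm{b}(\bm{z}_i)^*\bm{b}(\bm{z}_j))/(1-\bm{z}_i^*\bm{z}_j)\bigr]_{i,j}$.

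For sufficiency I would run the argument in reverse: assuming the kernel inequality holds at every finite configuration, the sesquilinear form $\sum_{i,j}\langle (\varepsilon^2\bm{I}-\bm{b}(\bm{z}_i)^*\bm{b}(\bm{z}_j))/(1-\bm{z}_i^*\bm{z}_j)\,\bm{c}_j,\bm{c}_i\rangle_{\mathcal{D}}$ defines a bounded positive operator on the dense span of reproducing kernels. Expanding this form shows that, on such linear combinations, $\|M_{\bm{b}}^*f\|^2\le\varepsilon^2\|f\|^2$; by density, $M_{\bm{b}}^*$ (hence $M_{\bm{b}}$) extends to a bounded operator of norm at most $\varepsilon$, proving $\bm{b}$ is a multiplier. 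Taking the infimum over admissible $\varepsilon$ identifies it with $\|\bm{b}\|_{\mathcal{M}(\mathcal{H}^2_n\otimes\mathcal{D},\mathcal{H}^2_n)}$.

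The main obstacle is the bookkeeping in the vector-valued setting: one must treat $\mathcal{H}^2_n\otimes\mathcal{D}$ as a space of $\mathcal{D}$-valued functions with reproducing kernel $k(\bm{z},\bm{w})\,\mathrm{id}_{\mathcal{D}}$, compute $M_{\bm{b}}^*$ correctly on ``tagged'' reproducing kernels $k_{\bm{w}}\otimes\bm{d}$, and verify that the resulting kernel inequality is exactly the matrix positivity claimed, with $\bm{I}$ being the identity on $\mathcal{D}$. Once this identification is made, the equivalence follows from the generic RKHS boundedness principle, and the infimum assertion is a direct consequence of the operator-norm formulation.
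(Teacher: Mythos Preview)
The paper does not prove this proposition at all: it is stated with a citation to \cite{sautel2022some} and used as a black box (notably in the proof of Theorem~\ref{thm: pos}), so there is no ``paper's own proof'' to compare against. Your proposal is the standard RKHS argument for characterizing multipliers via kernel positivity, and it is correct in outline: the computation $M_{\bm{b}}^*k_{\bm{w}}=k_{\bm{w}}\otimes\overline{\bm{b}(\bm{w})}$ together with density of kernel functions gives both directions, and the infimum identification follows from $\|M_{\bm{b}}\|=\|M_{\bm{b}}^*\|$. The only point to be careful about in the sufficiency direction is that you are \emph{defining} a bounded operator on the span of kernels and then checking, after taking its adjoint, that it agrees with pointwise multiplication by $\bm{b}$; this is routine but should be said explicitly, since a priori you do not know $\bm{b}\bm{h}$ lands in $\mathcal{H}^2_n$.
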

This proposition implies that the kernel $k^{\bm{b}}$ is pd if and only if the corresponding multiplier function $\bm{b}\in\mathcal{M}(\mathcal{H}^2_n \otimes \mathcal{D}, \mathcal{H}^2_n)$ with $\Vert\bm{b}\Vert_{\mathcal{M}(\mathcal{H}^2_n \otimes \mathcal{D}, \mathcal{H}^2_n)}\leq 1$. 
\subsubsection{Contractive Space Structure}

The de Branges-Rovnyak space, using the Drury–Arveson Hardy space   $\mathcal{H}_n^2$ as a bridge, naturally allows an isometric embedding into hyperbolic geometry. Here, the Drury–Arveson Hardy space is the canonical RKHS on the unit ball $\mathbb{B}^n$ with reproducing kernel $k^{da}(\bm{z}_i,\bm{z}_j)=\frac{1}{1-\bm{z}_i^*\bm{z}_j}$, and isometric to the Poincar\'e ball~\cite{arcozzi2007diameter,DBLP:conf/ijcai/YangFX23}.

Since $k^{\bm{b}}$ is pd, it follows that the corresponding RKHS $\mathcal{H}_n^{\bm{b}}$ is a subspace of the Drury Arveson Hardy space by the following proposition:
\begin{proposition} \cite{sautel2022some}
   Given two kernel functions $k_1$ and $k_2$, and their corresponding RKHS $\mathcal{H}_1$ and $\mathcal{H}_2$, $\mathcal{H}_1\subseteq\mathcal{H}_2$ if and only if $\varepsilon^2k_2-k_1\succeq 0$ for some $\varepsilon>0$.
\end{proposition}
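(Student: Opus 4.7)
The plan is to prove both implications separately, leveraging two classical facts: Aronszajn's theorem on sums of positive definite kernels for the sufficiency direction, and the closed graph theorem together with a short adjoint computation for necessity.

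For sufficiency, assume $\varepsilon^2 k_2 - k_1 \succeq 0$ for some $\varepsilon > 0$. Then $k_3 := \varepsilon^2 k_2 - k_1$ is itself a valid reproducing kernel and generates an RKHS $\mathcal{H}_{k_3}$. Writing $\varepsilon^2 k_2 = k_1 + k_3$ and invoking Aronszajn's sum-of-kernels theorem identifies $\mathcal{H}_{\varepsilon^2 k_2}$ with $\mathcal{H}_1 + \mathcal{H}_{k_3}$ as a set. Since rescaling a kernel by a positive constant leaves the underlying function space unchanged (only the norm is rescaled), $\mathcal{H}_{\varepsilon^2 k_2}$ and $\mathcal{H}_2$ coincide setwise, so $\mathcal{H}_1 \subseteq \mathcal{H}_1 + \mathcal{H}_{k_3} = \mathcal{H}_2$.

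For necessity, I would first upgrade the set-theoretic inclusion $\mathcal{H}_1 \subseteq \mathcal{H}_2$ to a bounded embedding $J:\mathcal{H}_1 \to \mathcal{H}_2$. Since point evaluation is continuous on both Hilbert spaces, any $f_n \to 0$ in $\mathcal{H}_1$ with $f_n \to g$ in $\mathcal{H}_2$ forces $g\equiv 0$ pointwise, so the graph of $J$ is closed and the closed graph theorem yields $\|J\| \leq \varepsilon$ for some $\varepsilon > 0$. Next, I compute the adjoint $J^\ast$ on kernel sections: applying the reproducing properties of $k_1$ and $k_2$ in turn, together with Hermitian symmetry of the kernels, produces the identity $J^\ast k_2(\cdot, y) = k_1(\cdot, y)$. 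Feeding an arbitrary finite linear combination $f := \sum_i a_i k_2(\cdot, x_i)$ into the operator inequality $\|J^\ast f\|_{\mathcal{H}_1}^2 \leq \varepsilon^2 \|f\|_{\mathcal{H}_2}^2$ and expanding both sides as quadratic forms in $(a_i)$ via the reproducing properties converts the Hilbert-space bound directly into the pointwise positivity $\varepsilon^2 k_2 - k_1 \succeq 0$.

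The step I expect to require the most care is the adjoint identification, because the de Branges--Rovnyak kernels live on the complex ball $\mathbb{B}^n$ and are complex-valued, so one must carefully track Hermitian conjugates when reducing $\langle k_2(\cdot, y), k_1(\cdot, x)\rangle_{\mathcal{H}_2}$ to $k_1(x, y)$ and checking that the resulting quadratic form is exactly the positivity of $\varepsilon^2 k_2 - k_1$ (rather than its conjugate transpose, which would be equivalent but requires a relabeling). Beyond that bookkeeping subtlety, the argument is essentially mechanical: sufficiency is a direct invocation of Aronszajn's sum theorem plus a trivial kernel rescaling, and necessity reduces to computing the adjoint on kernel sections and reading off the resulting quadratic-form inequality.
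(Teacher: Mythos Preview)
Your argument is correct and is in fact the standard textbook proof of this classical RKHS inclusion criterion (Aronszajn's sum theorem for sufficiency; closed graph plus the adjoint identity $J^{*}k_2(\cdot,y)=k_1(\cdot,y)$ for necessity). Note, however, that the paper does not supply its own proof of this proposition: it is quoted verbatim from the cited reference and used as a black box to deduce $\mathcal{H}_n^{\bm b}\subseteq\mathcal{H}_n^2$, so there is no in-paper argument to compare against. Your worry about Hermitian bookkeeping is well placed given the paper's convention $\sum_{i,j}c_ic_j^{*}k(\bm z_j,\bm z_i)\ge 0$, but as you already observe it amounts only to a relabeling of indices and does not affect the conclusion.
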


\noindent Setting $\varepsilon=1$, $k_1=k^{\bm{b}}$, and $k_2=k^{da}$, we obtain $k^{da}(\bm{z}_i,\bm{z}_j)-k^{\bm{b}}(\bm{z}_i,\bm{z}_j)=\frac{\bm{b}(\bm{z}_i)^*\bm{b}(\bm{z}_j)}{1-\bm{z}_j^*\bm{z}_i}=(\bm{b}(\bm{z}_i)^*\bm{b}(\bm{z}_j))k^{da}(\bm{z}_i,\bm{z}_j):=k'(\bm{z}_i,\bm{z}_j)\cdot k^{da}(\bm{z}_i,\bm{z}_j)$. 
Since both \(k'\) and \(k^{da}\) are pd, \(k^{da}\) is also pd by the Schur Product Theorem~\cite{Schur1911},  which implies
\(\mathcal{H}_n^{\bm b}\subseteq \mathcal{H}_n^2\).

As a contractive subspace of the Drury Arveson Hardy space, $\mathcal{H}_n^{\bm{b}}$ isometrically embeds into a subspace of the Poincar\'e ball, allowing hyperbolic kernels defined on it to map hierarchical data with minimal distortion. Besides, this inherent contraction of the RKHS reduces the scale of hypothesis space and serves as an implicit regularizer, accelerating convergence and improving generalization.

\subsubsection{Generalization to Arbitrary Curvature}
To extend this isometry to Poincar\'e balls of any curvature, we introduce the curvature-aware de Branges–Rovnyak kernel:
\begin{equation}
 k^{\bm{b}}_c(\bm{z}_i,\bm{z}_j)=\frac{1-c\bm{b}(\bm{z}_i)^*\bm{b}(\bm{z}_j)}{1-c\bm{z}_i^*\bm{z}_j}  \quad \bm{z}_i, \bm{z}_j\in\mathbb{D}^n(c),
\end{equation}
where $\bm{b}:\mathbb{D}^n(c)\to \mathcal{D}$. This kernel induces a RKHS, namely the curvature-aware de Branges-Rovnyak space denoted by $\mathcal{H}_n^{\bm{b}}(c)$. Under the coordinate rescaling $\bm{z}\to\sqrt{c}\bm{z}$, this kernel and its RKHS coincide exactly with the standard Poincar\'e ball model at curvature $-1$ (i.e.\ c=1), so all positive definiteness and contractive-embedding properties carry over without modification. 
In this setting, the multiplier norm condition positive definiteness constraint becomes
$\bm{b}\in \mathcal{M}(\mathcal{H}^2_n(c)\otimes \mathbb{D}^n(c),\mathcal{H}^2_n(c))$ with $\sqrt{c}\Vert\bm{b}\Vert_{\mathcal{M}(\mathcal{H}^2_n(c)\otimes \mathbb{D}^n(c),\mathcal{H}^2_n(c))}\leq 1$, where $\mathcal{H}^2_n(c)$ is the RKHS induced by the kernel $k_c^{da}(\bm{z}_i,\bm{z}_j)=\frac{1}{1-c\bm{z}_i^*\bm{z}_j}$, ensuring $k_c^{\bm{b}}$  remains a valid kernel.

This family of curvature-aware kernels thus provides a seamless way to extend the de Branges–Rovnyak construction to Poincar\'e balls of arbitrary curvature, while preserving low-distortion embedding and implicit regularization.

\subsection{Proposed Hyperbolic Kernels}
\subsubsection{Design of the Multiplier $\bm{b}$}
We modulate the original hyperbolic features via M\"obius self-mappings, enabling the kernel to adaptively enhance or suppress pairwise similarities. 
Concretely, we define: 
\begin{equation}
\begin{aligned}
    \bm{b}(\bm{z})&=\frac{1}{2}\sum_{i=1}^{m}w_i\left(\bm{\varphi}^c_{\bm{a}_i}(\bm{z})+\bm{\varphi}^c_{-\bm{a}_i}(\bm{z})\right)\\
&=\sum_{i=1}^{m}w_i\frac{(c\bm{a}_i^*\bm{z})\bm{a}_i-P_{\bm{a}_i}^c(\bm{z})-s^c_{\bm{a}_i}Q^c_{\bm{a}_i}(\bm{z})}{1-(c\bm{a}_i^*\bm{z})^2},
\end{aligned}
\label{eq: multiplier_def1}
\end{equation}
where each weight $w_i>0$ satisfies $\sum_iw_i=1$ and $\bm{a}_i\in\mathbb{D}^n(c)$ are learnable hyperbolic poles. When \( n = 1 \), this reduces to:
\begin{equation}
    b(z)=\frac{1}{m}\sum_{i=1}^{m}w_i\frac{(ca_i^*z)a_i}{1-(ca_i^*z)^2}.
    \label{eq: multiplier_def2}
\end{equation} 
By the convex combination of M\"obius mappings, the resulting de Branges–Rovnyak kernel: 
\begin{equation}
k^{\bm{b}}_c(\bm{z}_i,\bm{z}_j)=\frac{1-c\bm{b}(\bm{z}_i)^*\bm{b}(\bm{z}_j)}{1-\bm{z}_i^*\bm{z}_j}
\end{equation}
is pd. We clarify its positive definiteness by the following lemma and theorem:
\begin{lemma}   Given $\tilde{\bm{b}}(\bm{z})=\bm{\varphi}_{\bm{a}}^c(\bm{z})$, the kernel $k^{\tilde{\bm{b}}}_c(\bm{z}_i,\bm{z}_j)=\frac{1-c\tilde{\bm{b}}(\bm{z}_i)^*\tilde{\bm{b}}(\bm{z}_j)}{1-c\bm{z}_i^*\bm{z}_j}$ on $\mathbb{D}^n(c)$ is pd for any $\bm{a}\in\mathbb{D}^n(c)$. \label{lem: pos}
\end{lemma}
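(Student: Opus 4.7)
The plan is to show that the kernel admits an explicit rank-one factorization, so that positive definiteness follows immediately from its Gramian structure.

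First, I would establish the curvature-aware ``automorphism identity'' for $\bm{\varphi}^c_{\bm{a}}$, namely
\begin{equation*}
1 - c\,\bm{\varphi}^c_{\bm{a}}(\bm{z}_i)^{*}\,\bm{\varphi}^c_{\bm{a}}(\bm{z}_j)
= \frac{(1 - c\|\bm{a}\|^2)\,(1 - c\,\bm{z}_i^{*}\bm{z}_j)}{(1 - c\,\bm{z}_i^{*}\bm{a})(1 - c\,\bm{a}^{*}\bm{z}_j)}.
\end{equation*}
In the scalar case ($n=1$) this reduces to a short expansion of $(1 - c a \bar{z}_i)(1 - c\bar{a} z_j) - c(\bar{a}-\bar{z}_i)(a - z_j)$. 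For $n>1$ I would decompose $\bm{z}_i$ and $\bm{z}_j$ via $\bm{z} = P^c_{\bm{a}}(\bm{z}) + Q^c_{\bm{a}}(\bm{z})$, so that mixed inner products between $\mathrm{span}(\bm{a})$ and its orthogonal complement vanish, and apply $(s^c_{\bm{a}})^2 = 1 - c\|\bm{a}\|^2$ to collapse the remaining terms. Alternatively, the rescaling $\bm{z}\mapsto\sqrt{c}\bm{z}$ already noted in the paper conjugates the $c$-model to $c=1$, so one may invoke the classical Rudin identity on $\mathbb{B}^n$ and transport it.

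Second, dividing both sides by $1 - c\,\bm{z}_i^{*}\bm{z}_j$ yields
\begin{equation*}
k^{\tilde{\bm{b}}}_c(\bm{z}_i,\bm{z}_j) = \frac{1 - c\|\bm{a}\|^2}{(1 - c\,\bm{z}_i^{*}\bm{a})(1 - c\,\bm{a}^{*}\bm{z}_j)}.
\end{equation*}
Since $\bm{a}\in\mathbb{D}^n(c)$ implies $1 - c\|\bm{a}\|^2 > 0$ and, by Cauchy--Schwarz, $|c\,\bm{a}^{*}\bm{z}| < 1$ for all $\bm{z}\in\mathbb{D}^n(c)$, the function $f(\bm{z}) = \sqrt{1 - c\|\bm{a}\|^2}/(1 - c\,\bm{a}^{*}\bm{z})$ is holomorphic and nonvanishing on the ball. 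The identity above then reads $k^{\tilde{\bm{b}}}_c(\bm{z}_i,\bm{z}_j) = \overline{f(\bm{z}_i)}\,f(\bm{z}_j)$, a rank-one Gramian, and hence positive definite. The degenerate case $\bm{a}=\bm{0}$ gives $\bm{\varphi}^c_{\bm{0}}(\bm{z}) = -\bm{z}$ and $k^{\tilde{\bm{b}}}_c \equiv 1$, still pd.

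The main obstacle is the $n>1$ version of the automorphism identity: the correction terms $P^c_{\bm{a}}$ and $s^c_{\bm{a}}Q^c_{\bm{a}}$ inside $\bm{\varphi}^c_{\bm{a}}$ produce cross terms whose cancellation requires careful bookkeeping using the orthogonality of the decomposition and the scalar relations $\bm{a}^{*}P^c_{\bm{a}}(\bm{z}) = \bm{a}^{*}\bm{z}$ and $\bm{a}^{*}Q^c_{\bm{a}}(\bm{z}) = 0$. Once this identity is in hand, however, the rank-one factorization and hence positive definiteness of $k^{\tilde{\bm{b}}}_c$ are immediate, and one need not verify the multiplier-norm condition of Proposition~\ref{prop: mul} separately.
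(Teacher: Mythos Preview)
Your proposal is correct and follows essentially the same route as the paper: establish the curvature-aware automorphism identity $1 - c\,\bm{\varphi}^c_{\bm{a}}(\bm{z}_i)^{*}\bm{\varphi}^c_{\bm{a}}(\bm{z}_j) = (1 - c\|\bm{a}\|^2)(1 - c\,\bm{z}_i^{*}\bm{z}_j)/[(1 - c\,\bm{z}_i^{*}\bm{a})(1 - c\,\bm{a}^{*}\bm{z}_j)]$, then observe that the resulting kernel is the rank-one Gramian $f_{\bm{a}}(\bm{z}_i)\overline{f_{\bm{a}}(\bm{z}_j)}$ with $f_{\bm{a}}(\bm{z}) = \sqrt{1-c\|\bm{a}\|^2}/(1 - c\,\bm{z}^{*}\bm{a})$. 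Your treatment is in fact more complete than the paper's, which simply states the identity without the $P^c_{\bm{a}}/Q^c_{\bm{a}}$ bookkeeping or the rescaling-to-$c=1$ alternative that you outline.
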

\begin{theorem}    Given $\hat{\bm{b}}(\bm{z})=\sum_{i=1}^{m}w_i\bm{\varphi}^c_{\bm{a}_i}(\bm{z})$, $\forall \bm{a}_1,\ldots \bm{a}_m\in\mathbb{D}^n(c)$ and $w_i>0,\sum_{i=1}^mw_i=1$ , the corresponding kernel $k^{\hat{\bm{b}}}_c(\bm{z}_i,\bm{z}_j)=\frac{1-c\hat{\bm{b}}(\bm{z}_i)^*\hat{\bm{b}}(\bm{z}_j)}{1-c\bm{z}_i^*\bm{z}_j}$ is always pd for any $m\geq 1$.\label{thm: pos}
\end{theorem}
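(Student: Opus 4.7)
The plan is to leverage Lemma \ref{lem: pos} together with the multiplier-norm characterization in Proposition \ref{prop: mul} (as adapted to arbitrary curvature by the coordinate rescaling $\bm{z}\to\sqrt{c}\bm{z}$ described earlier). The curvature-aware version tells us that $k^{\bm{b}}_c$ is pd if and only if $\bm{b}\in \mathcal{M}(\mathcal{H}^2_n(c)\otimes\mathcal{D},\mathcal{H}^2_n(c))$ with $\sqrt{c}\,\Vert\bm{b}\Vert_{\mathcal{M}}\le 1$. So the whole task reduces to showing that $\hat{\bm{b}}=\sum_{i=1}^m w_i\,\bm{\varphi}^c_{\bm{a}_i}$ lies in this multiplier space with norm bounded by $1/\sqrt{c}$.

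First, I would invoke Lemma \ref{lem: pos}: for each pole $\bm{a}_i\in\mathbb{D}^n(c)$, the kernel $k^{\tilde{\bm{b}}}_c$ with $\tilde{\bm{b}}=\bm{\varphi}^c_{\bm{a}_i}$ is pd. Applying Proposition \ref{prop: mul} in its curvature-$c$ form, this is equivalent to the statement that $\bm{\varphi}^c_{\bm{a}_i}\in \mathcal{M}(\mathcal{H}^2_n(c)\otimes\mathcal{D},\mathcal{H}^2_n(c))$ with
\begin{equation}
\sqrt{c}\,\bigl\Vert\bm{\varphi}^c_{\bm{a}_i}\bigr\Vert_{\mathcal{M}}\le 1,\qquad i=1,\ldots,m.
\end{equation}
In other words, each generalized M\"obius self-map is a contractive multiplier after the rescaling by $\sqrt{c}$.

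Second, I would use the fact that $\Vert\cdot\Vert_{\mathcal{M}}$, being the operator norm of the pointwise multiplication operator on a Hilbert function space, is genuinely a norm and therefore satisfies the triangle inequality and positive homogeneity. Combining this with $w_i>0$ and $\sum_{i=1}^m w_i=1$ yields
\begin{equation}
\bigl\Vert\hat{\bm{b}}\bigr\Vert_{\mathcal{M}}
=\Bigl\Vert\sum_{i=1}^m w_i\,\bm{\varphi}^c_{\bm{a}_i}\Bigr\Vert_{\mathcal{M}}
\le \sum_{i=1}^m w_i\,\bigl\Vert\bm{\varphi}^c_{\bm{a}_i}\bigr\Vert_{\mathcal{M}}
\le \sum_{i=1}^m w_i\cdot\frac{1}{\sqrt{c}}=\frac{1}{\sqrt{c}}.
\end{equation}
Multiplying by $\sqrt{c}$ gives $\sqrt{c}\,\Vert\hat{\bm{b}}\Vert_{\mathcal{M}}\le 1$, and Proposition \ref{prop: mul} (curvature-$c$ form) immediately yields positive definiteness of $k^{\hat{\bm{b}}}_c$ for every $m\ge 1$.

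The step I expect to require the most care is the passage from Lemma \ref{lem: pos} to the sharp multiplier-norm bound $\sqrt{c}\,\Vert\bm{\varphi}^c_{\bm{a}_i}\Vert_{\mathcal{M}}\le 1$: Proposition \ref{prop: mul} only guarantees that such a finite $\varepsilon$ exists, so one must check that positive definiteness of $k^{\tilde{\bm{b}}}_c$ corresponds exactly to the choice $\varepsilon=1/\sqrt{c}$ under the curvature rescaling. Once this identification is pinned down (which is essentially the content of the rescaling argument $\bm{z}\mapsto\sqrt{c}\bm{z}$ already sketched in the paper), the remainder of the proof is a direct application of the norm's triangle inequality and convexity of the weights.
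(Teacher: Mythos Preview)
Your proposal is correct and matches the paper's proof essentially step for step: Lemma~\ref{lem: pos} combined with the curvature-aware Proposition~\ref{prop: mul} yields $\sqrt{c}\,\Vert\bm{\varphi}^c_{\bm{a}_i}\Vert_{\mathcal{M}}\le 1$, then the triangle inequality for the multiplier norm and the convex weights give $\sqrt{c}\,\Vert\hat{\bm{b}}\Vert_{\mathcal{M}}\le 1$, and the converse direction of Proposition~\ref{prop: mul} closes the argument. The subtlety you flag about identifying the sharp bound with $\varepsilon=1/\sqrt{c}$ is exactly the point the paper handles via the rescaling $\bm{z}\mapsto\sqrt{c}\bm{z}$, so there is no gap.
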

Moreover, $\bm{b}$ satisfies {{$\bm{b}(\bm{0})=\bm{0}$, $\bm{b}(-\bm{z})=-\bm{b}(\bm{z})$}}, and  {{$k_c^{\bm{b}}(-\bm{z}_i,-\bm{z}_j)=k_c^{\bm{b}}(\bm{z}_i,\bm{z}_j)$}}. These symmetry properties preserve the structure of hyperbolic geometry, ensure geometric consistency, and yield stable, interpretable similarity measures. They also constrain the hypothesis space, acting as an implicit regularizer.

\subsubsection{Design of Hyperbolic Kernels}
Following the work of \cite{DBLP:conf/ijcai/YangFX23}, we interpret hyperbolic feature similarity as inner products in the de Branges–Rovnyak RKHS. For each data point $z$, let $\hat{k}^{\bm{b}}_{\bm{z}_i}\in \mathcal{H}_n^{\bm{b}}(c)$ denote the functional in de Branges–Rovnyak space. Then the basic de Branges–Rovnyak kernel is:
\begin{equation}
k^{\bm{b}}_c(\bm{z}_i,\bm{z}_j)=\langle \hat{k}^{\bm{b}}_{\bm{z}_i}, \hat{k}^{\bm{b}}_{\bm{z}_j}\rangle=\frac{1-c\bm{b}(\bm{z}_i)^*\bm{b}(\bm{z}_j)}{1-\bm{z}_i^*\bm{z}_j}.
\end{equation}
As in Euclidean kernel design, one may simply replace $\bm{z}$ by $\hat{k}^{\bm{b}}_{\bm{z}_i}$ in standard kernels to obtain four adaptive hyperbolic variants:
 \\adaptive hyperbolic linear kernel \textbf{(AHL)}: {{$k^{\text{AHL}}(\bm{z}_i,\bm{z}_j)= k^{\bm{b}}_c(\bm{z}_i,\bm{z}_j)$}},\\
adaptive hyperbolic polynomial Kernel \textbf{(AHPoly)}: {{$k^{\text{AHPoly}}(\bm{z}_i,\bm{z}_j)= \left(\langle\hat{k}^{\bm{b}}_{\bm{z}_i},\hat{k}^{\bm{b}}_{\bm{z}_j}\rangle+b\right)^d,b,d>0$}},\\
  adaptive hyperbolic RBF kernel \textbf{(AHRBF)}:
        {{$k^{\text{AHRBF}}(\bm{z}_i,\bm{z}_j)= \text{exp}\left(-\frac{\Vert \hat{k}^{\bm{b}}_{\bm{z}_i}- \hat{k}^{\bm{b}}_{\bm{z}_j}\Vert^2}{2\tau^2}\right), \tau>0$}},
        \\ adaptive hyperbolic Laplacian kernel \textbf{(AHLap)}:
        {{$k^{\text{AHLap}}(\bm{z}_i,\bm{z}_j)= \text{exp}\left(-\frac{\Vert \hat{k}^{\bm{b}}_{\bm{z}_i}- \hat{k}^{\bm{b}}_{\bm{z}_j}\Vert}{\tau}\right), \tau>0$}}.\\
These kernels are straightforward generalizations, while our main contribution lies in the following 
adaptive hyperbolic radial kernel (\textbf{AHRad}).

To construct the AHRad, we first define a base kernel as the squared cosine similarity of the normalized  representers in the de Branges-Rovnyak space as: 
\begin{equation}
k^{\text{base}}(\bm{z}_i,\bm{z}_j)=\left\Vert\left\langle \frac{\hat{k}^{\bm{b}}_{\bm{z}_i}}{\Vert\hat{k}^{\bm{b}}_{\bm{z}_i}\Vert}, \frac{\hat{k}^{\bm{b}}_{\bm{z}_j}}{\Vert\hat{k}^{\bm{b}}_{\bm{z}_j}\Vert}\right\rangle\right\Vert^2
\end{equation}
which satisfies $0\leq k^{\text{base}}(\bm{z}_i,\bm{z}_j)< 1$. Subsequently, we can express the  AHRad as a nonnegative power series~\cite{a54edda841764a338ae25d853527949e,DBLP:journals/pami/JayasumanaHSLH15}:
\begin{equation}
\begin{aligned}
k^{\text{AHRad}}(\bm{z}_i,\bm{z}_j)&=\sum_{l=0}^{\infty}\alpha_l\left(k^{\text{base}}(\bm{z}_i,\bm{z}_j)\right)^{l}\\&+\sum_{l=-1,-2}\alpha_{l}k_{l}(\bm{z}_i,\bm{z}_j),
\end{aligned}
\end{equation}
where $\alpha_l\geq0$ and $\sum_{l=-2}^{\infty}\alpha_i<\infty$. The terms involving 
$k_{-1}$ and $k_{-2}$ are only needed to force exact self‐similarity in the infinite expansion and can be dropped in practice since $k^{\text{AHRad}}(\bm{z}_i,\bm{z}_i)>k^{\text{AHRad}}(\bm{z}_i,\bm{z}_j)$ holds strictly when $\bm{z}_i\neq\bm{z}_j$ even without these two terms. Moreover, since $\Vert k^{\text{base}}(\bm{z}_i,\bm{z}_j)\Vert< 1$, the remainder beyond $l=K$ decays geometrically, making this finite expansion both computationally efficient and numerically stable. Therefore, the infinite series can be implemented as: 
\begin{equation}
k^{\text{AHRad}}(\bm{z}_i,\bm{z}_j)=\sum_{l=0}^{K}\alpha_l\left(k^{\text{base}}(\bm{z}_i,\bm{z}_j)\right)^{l}.
\label{eq:AHRad}
\end{equation}
 
By construction, $k^{\text{AHRad}}$ remains pd. Besides, it aligns with the multi-kernel learning strategy, thus it can accommodate higher-order feature interactions.

\section{Experiments}

We conduct three groups of experiments, including few-shot learning and zero-shot learning on image datasets, and the semantic textual similarity evaluation (STS) task on a text dataset, to demonstrate the superiority of our proposed hyperbolic kernels. The few-shot and zero-shot tasks were run on an NVIDIA RTX 3090 Ti, while the STS task was run on an NVIDIA RTX 4090.

For each task, Euclidean features are first projected onto the Poincar\'e ball using the exponential map in Eq. \eqref{eq: exp} or an alternative mapping \cite{guo2022clipped} defined by {$\text{CLIP}_{\beta, \varepsilon}(\bm{x})=\beta \text{min}\left\{1,\frac{1-\varepsilon}{\sqrt{c}\Vert\bm{x}\Vert}\right\}\bm{x}$},
where $\varepsilon\!\in\!(0,1)$ controls the clipping radius and $\beta$ is an additional scaling factor.

\subsection{Few-Shot Learning}

\subsubsection{Experimental Framework}
In this section, we consider the task of few-shot image classification, to learn a model capable of rapidly generalizing to novel categories using only a few labeled samples. We adopt Prototypical networks \cite{snell2017prototypical}, a classic metric learning-based approach as our backbone, and follow the kernel learning paradigm established by Fang et al.\ \cite{DBLP:journals/ijcv/FangHLP23}. Within this framework, we embed our kernel function into the loss function, replacing the original metric.

We compare our proposed kernels against nine hyperbolic kernels, including the Poincar\'e Linear kernel (\textbf{PTang}), Poincar\'e RBF kernel (\textbf{PRBF}), Poincar\'e Laplace kernel (\textbf{PLap}), Poincar\'e Binomial kernel (\textbf{PBin}), and Poincar\'e Radial kernel (\textbf{PRad})~\cite{DBLP:journals/ijcv/FangHLP23}, as well as the Curvature-aware Hyperbolic Linear kernel (\textbf{CHL}), Curvature-aware Hyperbolic Polynomial kernel (\textbf{CHPoly}), Curvature-aware Hyperbolic RBF kernel (\textbf{CHRBF}) and Curvature-aware Hyperbolic Laplacian kernel (\textbf{CHLap})~\cite{DBLP:conf/ijcai/YangFX23}. We adopt as our baseline a method based on hyperbolic embeddings, where the hyperbolic geodesic distance is used as the similarity metric \cite{DBLP:conf/cvpr/KhrulkovMUOL20}.

\subsubsection{Datasets and Evaluation}

We evaluate our kernel functions on two image datasets, \textbf{CUB}~\cite{CUB} and \textbf{mini-ImageNet}~\cite{deng2009imagenet}, with detailed dataset

\begin{table}[t]
\centering
\setlength{\tabcolsep}{1.4mm}
\begin{tabular}{l|c|cc|cc}
\toprule
\multirow{2}{*}{Methods} & \multirow{2}{*}{Backbone} & \multicolumn{2}{c|}{CUB} & \multicolumn{2}{c}{\emph{mini}ImageNet} \\
\cmidrule{3-6}
& & 5w1s & 5w5s & 5w1s & 5w5s \\
\midrule
PTang & Conv-4 & $60.1_{0.3}$ & $82.0_{0.2}$ & $54.0_{0.2}$ & $73.1_{0.2}$ \\
PRBF & Conv-4 & $61.4_{0.2}$ & $82.7_{0.2}$ & $54.4_{0.2}$ & $73.1_{0.2}$ \\
PLap & Conv-4 & $62.9_{0.2}$ & $81.7_{0.2}$ & $53.1_{0.2}$ & $71.3_{0.2}$ \\
PBin & Conv-4 & $62.6_{0.2}$ & $83.0_{0.1}$ & $53.4_{0.2}$ & $72.6_{0.2}$ \\
PRad & Conv-4 & $65.6_{0.2}$ & $82.4_{0.2}$ & $53.6_{0.2}$ & $72.9_{0.2}$ \\
\midrule
CHL & Conv-4 & \textbf{65.8}$_{0.2}$ & $82.9_{0.2}$ & $53.6_{0.2}$ & $72.6_{0.2}$ \\
CHPoly & Conv-4 & $62.3_{0.2}$ & $80.3_{0.2}$ & $52.6_{0.2}$ & $71.3_{0.2}$ \\
CHRBF & Conv-4 & $56.7_{0.2}$ & $79.9_{0.2}$ & $53.5_{0.2}$ & $71.9_{0.2}$ \\
CHLap & Conv-4 & $56.9_{0.2}$ & $79.1_{0.2}$ & $52.7_{0.2}$ & $71.5_{0.2}$ \\
\midrule
AHL & Conv-4 & $61.2_{0.2}$ & $82.7_{0.1}$ & $52.5_{0.2}$ & $71.7_{0.2}$ \\
AHPoly & Conv-4 & $62.0_{0.2}$ & $82.2_{0.2}$ & $53.8_{0.2}$ & $71.7_{0.2}$ \\
AHRBF & Conv-4 & $60.6_{0.2}$ & $82.2_{0.2}$ & $52.4_{0.2}$ & $72.0_{0.2}$ \\
AHLap & Conv-4 & $60.9_{0.2}$ & $81.6_{0.2}$ & $53.0_{0.2}$ &  $71.5_{0.2}$\\
AHRad & Conv-4 & $63.9_{0.2}$ & \textbf{83.3}$_{0.2}$ & \textbf{54.6}$_{0.2}$ & \textbf{73.2}$_{0.2}$ \\
\midrule
Baseline & Conv-4 & $59.6_{0.2}$ & $78.3_{0.2}$ & $52.7_{0.2}$ & $71.7_{0.2}$ \\
\bottomrule
\end{tabular}
\caption{Few-shot learning experimental results. Backbone denotes the backbone feature extraction model adopted. "CwMs" represents C-way M-shot. The metric is mean classification accuracy (ACC\%, $\uparrow$). The best result for each dataset and experimental setting is highlighted in \textbf{bold}. Subscripts indicate the 95\% confidence interval.}
\label{tab:fsl}
\end{table}
\noindent configurations provided in the supplementary material. We adopt the mean classification accuracy (ACC) as our evaluation metric, and conduct few-shot learning experiments under the settings of $5$-way, $\{1, 5\}$-shot on both datasets. 

\subsubsection{Experimental Results}
According to Table \ref{tab:fsl}, our proposed AHRad achieves the best performance on the mini-ImageNet dataset as well as the 5-way 5-shot task on the CUB dataset. For the 5-way 1-shot task, it ranks just behind the PRad and CHL. These results demonstrate that our kernel exhibits strong competitiveness compared to other kernel functions in few-shot learning. Notably, other adaptive hyperbolic kernels also exhibit comparable or superior performance compared to their corresponding curvature-aware hyperbolic (\textbf{CH}) counterparts, as a reference.

\subsection{Zero-Shot Learning}

\subsubsection{Experimental Framework}
We focus on cross-modal zero-shot learning aiming to align semantic and visual modalities on seen classes, and to recognize images from unseen classes during inference \cite{akata2015label,xian2016latent}. Following the kernel learning paradigm in the work of Fang et al.\ \cite{DBLP:journals/ijcv/FangHLP23}, the original Euclidean metric is replaced by our proposed kernels.

We adopt the same comparison methods as those used in the few-shot learning section. Besides, we also adopt additional typical zero-shot learning methods for comparison, including \textbf{LATEM}~\cite{xian2016latent}, \textbf{DEVISE}~\cite{frome2013devise}, \textbf{DEM}~\cite{zhang2017learning},  \textbf{ALE}~\cite{akata2015label}, \textbf{SP-AEN}~\cite{DBLP:conf/cvpr/ChenZ00C18}, \textbf{CRnet}~\cite{zhang2019co} and \textbf{Kai et al.}~\cite{li2019rethinking}. While the implementation of the baseline follows that of Fang et al.\ \cite{DBLP:journals/ijcv/FangHLP23}.

\begin{table}[t]
\centering
\setlength{\tabcolsep}{0.8mm}
\begin{tabular}{l|ccc|ccc|ccc}
\toprule
\raisebox{-1ex}{Methods} & \multicolumn{3}{c|}{CUB} & \multicolumn{3}{c|}{AWA1} & \multicolumn{3}{c}{AWA2} \\
\cline{2-10}
& \raisebox{-0.5ex}{U} & \raisebox{-0.5ex}{S} & \raisebox{-0.5ex}{HM} & \raisebox{-0.5ex}{U} & \raisebox{-0.5ex}{S} & \raisebox{-0.5ex}{HM} & \raisebox{-0.5ex}{U} & \raisebox{-0.5ex}{S} & \raisebox{-0.5ex}{HM} \\
\midrule
LATEM & 15.2 &57.3&24.0&7.3& $71.7$&13.3&11.5&77.3&20.0\\
 DEVISE  &  23.8&53.0&32.8&13.4&68.7&22.4 &17.1 &74.7&27.8\\
DEM &19.6 &57.9&29.2& 32.8&84.7&47.3 &30.5&86.4&45.1 \\
 ALE &23.7 &62.8 &34.4&16.8&76.1&27.5 &14.0&81.8&23.9\\
SP-AEN  &34.7 &70.6 &46.6 &-&-&-&23.3& 90.9 &37.1\\
CRnet &45.5 &56.8 &50.5 &58.1 &74.7&65.4&52.6&78.8&63.1\\
Kai et al.  &47.4&47.6  & 47.5 &62.7&77.0&69.1&56.4&81.4&66.7 \\
\midrule
PTang & 40.8 & 58.1 & 47.9 & 52.3 & 85.2 & 64.8 & 37.1 & 88.5 & 52.3 \\
PRBF & 44.6 & 57.8 & 50.3 & 59.0 & 84.6 & 69.5 & 42.9 & 89.5 & 57.9 \\
PLap & 46.2 & 56.1 & 50.7 & 60.7 & 83.5 & 70.3 & 54.1 & 87.1 & 66.7 \\
PBin & 39.8 & 56.9 & 46.8 & 43.7 & 88.9 & 58.6 & 39.8 & 90.5 & 55.4 \\
PRad & 45.8 & 57.6 & 51.0 & 60.2 & 86.7 & 71.1 & 48.2 & 90.3 & 62.8 \\
\midrule
CHL & 43.3 & 58.3 & 49.7 & 51.2 & 84.7 & 63.8 & 44.5 & 90.8 & 59.8 \\
CHPoly & 41.7 & 58.9 & 48.8 & 51.3 & 85.4 & 64.1 & 42.2 & 90.9 & 57.6 \\
CHRBF & 45.0 & 56.7 & 50.1 & 56.3 & 82.7 & 67.0 & 45.1 & 90.1 & 60.1 \\
CHLap & 45.2 & 56.1 & 50.1 & 53.4 & 88.9 & 66.7 & 44.9 & 90.9 & 60.1 \\
\midrule
AHL & 46.2 & 56.1 & 50.7 & 55.4 & 85.3 &67.2  & 49.4 & 89.4& 63.6 \\
AHPoly & 49.0 & 52.8 & 50.8 & 58.8 & 85.3 & 69.6 & 46.2 &87.6 &60.5  \\
AHRBF & 44.7 & 58.3 & 50.6 & 59.3 & 85.2 & 69.9 & 48.3 & 88.6 & 62.5 \\
AHLap & 47.4 & 55.3 & 51.0 & 56.8 & 82.5 & 67.3 & 47.1 & 86.5 & 61.0 \\
AHRad & 49.0 & 54.0 & \textbf{51.4} & 64.9 & 83.7 & \textbf{73.1} & 69.3 & 84.0 & \textbf{75.9} \\
\midrule
Baseline &18.6&44.6&26.3&29.8&76.4&42.9 &25.5&76.4&38.2\\
\bottomrule
\end{tabular}
\caption{Zero-shot learning experimental results. Backbone denotes the backbone feature extraction model adopted. "U" and "S" denote the mean classification accuracy (ACC\%, $\uparrow$) on the seen and unseen datasets, respectively. "HM" is the harmonic mean of "U" and "S". The best result for each dataset and experimental setting is highlighted in \textbf{bold}.}
\label{tab:zeroshot}
\end{table}

\subsubsection{Datasets and Evaluation}
We validated the zero-shot capability of our kernel function on three image datasets: \textbf{CUB} \cite{CUB}, \textbf{AWA1} \cite{lampert2013attribute}, and \textbf{AWA2} \cite{ALE_Akata_PAMI}. Details of the datasets are also provided in the supplementary material (please refer to the link of extended version). We evaluated the model's average classification accuracy on both the seen and unseen data, denoted as $\mathrm{S}$ and $\mathrm{U}$ respectively, to reflect the model's learning and generalization ability. Additionally, we computed the harmonic mean $\mathrm{HM}$ of $\mathrm{S}$ and $\mathrm{U}$ as a comprehensive metric for the model's overall capability.

\subsubsection{Experimental Results}

Our comparison includes both kernel-based methods—using different kernels within a unified framework—and other few-shot learning approaches that adopt distinct paradigms. As shown in Table \ref{tab:zeroshot}, AHRad attains the best performance on CUB, AWA1, and AWA2, surpassing the second-best method by 0.4\%, 2.0\%, and 9.2\%, respectively. These results show that our method is at least competitive on CUB and substantially outperforms the second-best approaches on AWA1 and AWA2. Moreover, it consistently yields the highest accuracy on unseen classes, with notable gains over the second-best method, indicating markedly improved generalization. In addition, the other adaptive hyperbolic kernels also achieve comparable or superior performance relative to their curvature-aware counterparts.

\subsection{Semantic Textual Similarity Evaluation}

\begin{table}[t]
\centering
\setlength{\tabcolsep}{4mm}
\begin{tabular}{l|c|c}
\toprule
Methods & Backbone & sup  \\
\midrule
PTang   & Bert-base-uncased & 84.36  \\
PRBF    & Bert-base-uncased & 84.84  \\
PLap    & Bert-base-uncased & 84.63  \\
PBin    & Bert-base-uncased & 84.06  \\
PRad    & Bert-base-uncased & 84.53  \\
\midrule
CHL     & Bert-base-uncased & 84.70  \\
CHPoly  & Bert-base-uncased & 84.63 \\
CHRBF   & Bert-base-uncased & 83.98 \\
CHLap   & Bert-base-uncased & 84.74  \\
\midrule
AHL     & Bert-base-uncased & 84.27  \\
AHPoly  & Bert-base-uncased & 84.68  \\
AHRBF   & Bert-base-uncased & 84.33 \\
AHLap   & Bert-base-uncased & 84.48  \\
\midrule
AHRad     & Bert-base-uncased & \textbf{85.16} \\
\midrule
Baseline & Bert-base-uncased & 84.24  \\
\bottomrule
\end{tabular}
\caption{Semantic textual similarity evaluation experimental results. Backbone denotes the backbone feature extraction model. "Sup" indicates experiments conducted under the supervised contrastive learning paradigm. The metric is Spearman's correlation coefficient (\%, $\uparrow$). The best result for each dataset and experimental setting is highlighted in \textbf{bold}.}
\label{tab:sts}
\end{table}

\subsubsection{Experimental Framework}

In this section, we design our experiments on contrastive learning-based semantic textual similarity evaluation. This task requires the model to recognize entailment or contradiction relationships between a premise sentence and a hypothesis sentence, assigning corresponding similarity scores. Specifically, we employ pretrained $\textbf{BERT}_{\text{base}}$ \cite{devlin2019bert} (uncased) as our base model. Following the \textbf{SimCSE} learning framework \cite{DBLP:conf/emnlp/GaoYC21}, we perform supervised pre-training and subsequently evaluate it on the semantic textual similarity benchmark (STS-B) task. We also adopt the same comparison methods as those used in the few-shot learning section. Our baseline is the original version of \textbf{SimCSE}, which employs cosine similarity as the metric.

\begin{figure}[t]
\centering
\includegraphics[width=0.9\columnwidth]{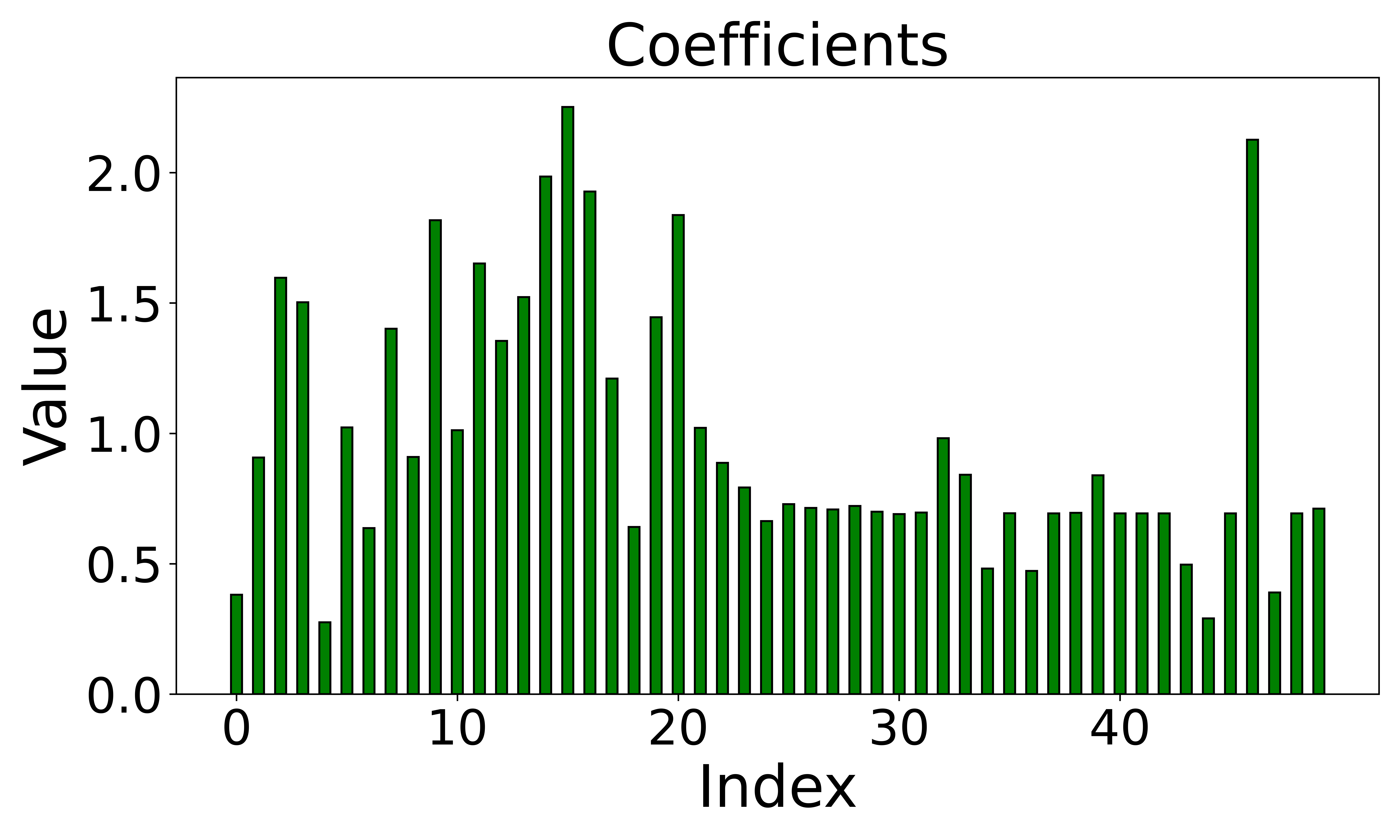}
\caption{Bar chart visualization of the parameters in Eq. \ref{eq:AHRad}, where the x-axis represents the index $l$ of the coefficients, and the y-axis denotes the magnitude of $a_l$}
\label{fig2}
\end{figure}

\subsubsection{Datasets and Evaluation}
For the pre-training stage, we adopt the hybrid dataset constructed from \textbf{MNLI} \cite{DBLP:conf/naacl/WilliamsNB18} and \textbf{SNLI} \cite{DBLP:conf/emnlp/BowmanAPM15}; for the evaluation stage, we adopt the \textbf{STS-B} dataset \cite{DBLP:conf/semeval/CerDALS17}. Our evaluation metric is the Spearman correlation between the similarity scores computed by the model and the human-annotated ground-truth scores.

\subsubsection{Experimental Results}
The baseline employs cosine similarity over Euclidean features, which is essentially a Euclidean kernel. In comparison, all hyperbolic kernels—except for \textbf{PBin} and \textbf{CHRBF}—consistently outperform the baseline, indicating that hyperbolic space can embed textual data with low distortion. According to table \ref{tab:sts}, our proposed \textbf{AHRad} achieves the best performance in this experiment, improving the correlation coefficient by 0.92 over the baseline, by 0.32 over the best-performing Poincar\'e kernel, and by 0.42 over the best-performing curvature-aware hyperbolic kernel. Notably, according to the original SimCSE results \cite{DBLP:conf/emnlp/GaoYC21}, upgrading the backbone network to the \textbf{RoBERTa}\textsubscript{base} model pretrained with a larger-scale corpus leads to an improvement of 1.58 over the baseline, suggesting that our kernel function can, to some extent, enhance the feature representation and serve as a lightweight alternative to larger models. Besides, other adaptive hyperbolic kernels also exhibit comparable or superior performance compared to their corresponding curvature-aware hyperbolic counterparts.

\section{Further Studies}

\begin{figure}[t]
\centering
\includegraphics[width=0.9\columnwidth]{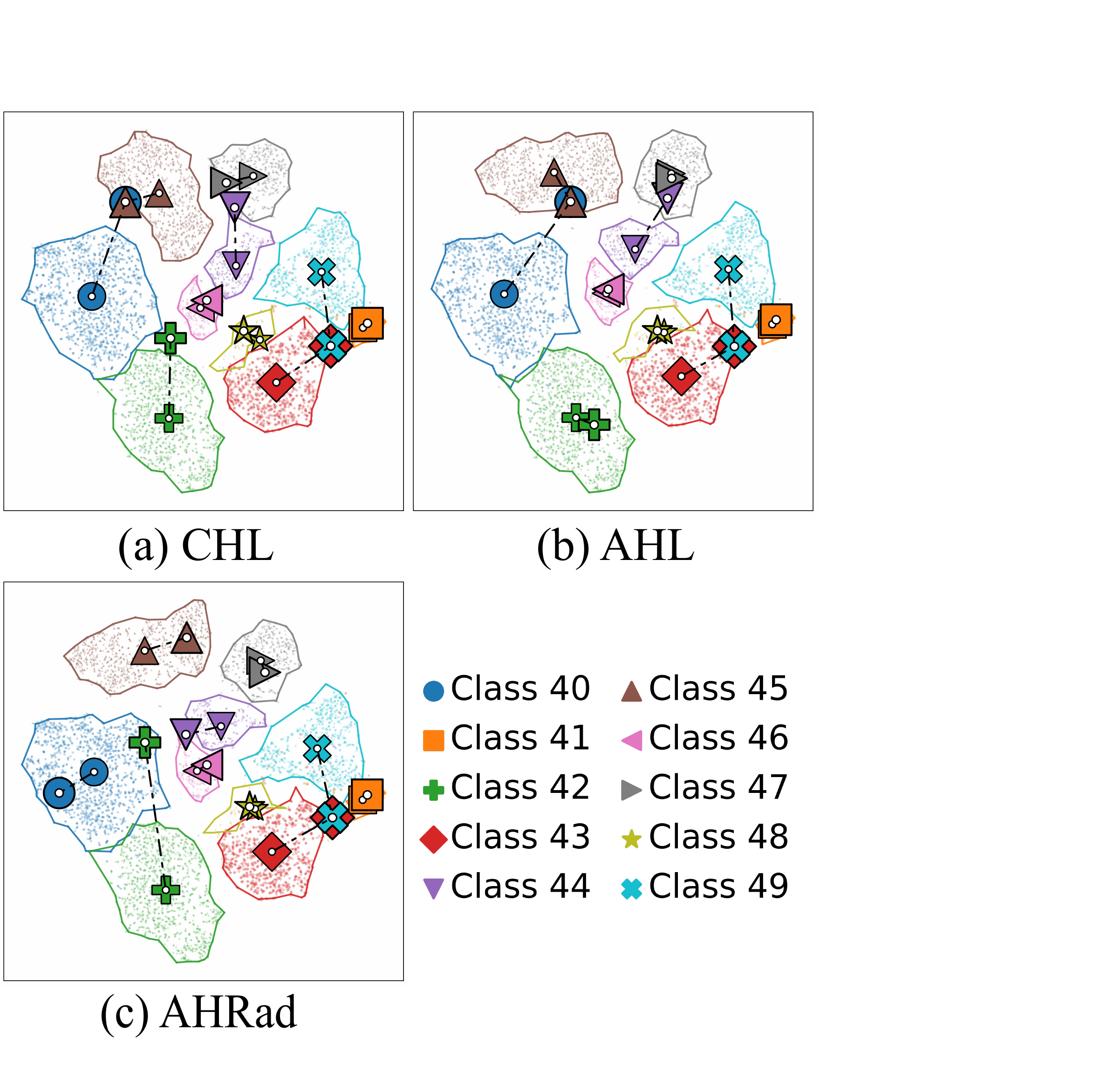}
\caption{Visualization of extracted features (visual and semantic) in the zero-shot learning setting on the AWA2 unseen dataset.}
\label{fig3}
\end{figure}

\subsection{Coefficients Distribution of AHRad}
In this section, we conduct zero-shot learning experiments on the CUB dataset and provide a visualization of the linear combination coefficients $a_l$ of the trained AHRad kernel. The results are presented in Figure \ref{fig2}.

This aims to offer an intuitive understanding of the multi-kernel learning framework of AHRad. Specifically, we visualize the coefficients $a_l$ corresponding to the first 50 terms in Equation \eqref{eq:AHRad} in our implementation. As shown, the low-order terms with small indices exhibit greater variation in their coefficients, while the high-order terms tend to have more stable coefficients. This indicates that lower-order components play a more significant role in shaping the kernel structure during training. 

\subsection{Kernel Embedding Features}

We additionally visualize the features extracted by CHL, AHL, and AHRad on unseen classes (40-49) on AWA2 dataset in zero-shot learning using the t-SNE \cite{maaten2008visualizing} visualization method. We consider two comparative groups: CHL vs. AHL and AHL vs. AHRad. The former comparison illustrates the regularization effect of the de Branges-Rovnyak space, while the latter demonstrates the representational capacity attributed to the multi-kernel learning framework induced by AHRad. In Figure \ref{fig3}, the visual features (point clouds), their corresponding centers (small markers), and the semantic features (large markers) are visualized. Our analysis focuses on the deviation between each class’s visual center and its semantic embedding, which reveals the kernel representation capacity. It can be observed that CHL exhibits the largest overall deviation; AHL reduces this to some extent, while AHRad achieves a substantially lower deviation than both of them.

\section{Conclusion}
This work proposes a family of adaptive hyperbolic kernels based on the curvature-aware de Branges–Rovnyak spaces. By leveraging isometric embeddings between the Poincar\'e ball and these spaces, we effectively reduce the distortion inherent in existing hyperbolic kernels. To further improve adaptability, we incorporate modulation mechanisms that dynamically align the kernels with data geometry. Built upon these foundations, we develop a series of adaptive hyperbolic kernels suitable for different tasks. Extensive experiments on few-shot and zero-shot learning tasks, as well as semantic textual similarity evaluation, demonstrate the superior performance of our proposed method. 

\section{Acknowledgments}
This work was supported by the National Natural Science Foundation of China (No. 62476056, T24B2005, 62306070) and the Social Development Science and Technology Project of Jiangsu Province (No. BE2022811). This work was also supported in part by the Southeast University Start-Up Grant for New Faculty under Grant 4009002309. Furthermore, the work was also supported by the Big Data Computing Center of Southeast University. This work was also supported by “the Fundamental Research Funds for the Central Universities(2242025K30024)”.

\bibliography{aaai2026}

\end{document}